\documentclass[11pt]{article}

\usepackage[final]{acl}

\usepackage{times}
\usepackage{latexsym}
\usepackage[T1]{fontenc}
\usepackage[utf8]{inputenc}
\usepackage{microtype}
\usepackage{inconsolata}
\usepackage{graphicx}

\usepackage{hyperref}
\usepackage{url}
\usepackage{booktabs}
\usepackage{amsmath,amssymb,amsfonts,amsthm}
\usepackage{nicefrac}
\usepackage{xcolor}
\usepackage[ruled,vlined]{algorithm2e}
\SetKwInput{KwInput}{Input}
\SetKwInput{KwOutput}{Output}
\usepackage{subcaption}
\usepackage{bbm}
\usepackage{bm}
\usepackage{multirow}
\usepackage[table]{xcolor}
\usepackage{enumitem}

\newtheorem{proposition}{Proposition}

\newcommand{\method}{CARE}
\newcommand{\kmin}{k_{\min}}
\newcommand{\kmax}{k_{\max}}
\newcommand{\pp}{\bm{p}}
\newcommand{\hh}{\bm{h}}
\newcommand{\expo}{\bm{e}}
\newcommand{\Real}{\mathbb{R}}
\newcommand{\Exp}{\mathbb{E}}

\newcommand{\flyavg}{86.0\%}
\newcommand{\careavg}{86.5\%}
\newcommand{\caregain}{0.5}

\newcommand{\qcareavg}{87.9\%}

\newcommand{\qcareoverbest}{0.5}
\newcommand{\extcare}{48.8\%}
\newcommand{\extgain}{0.9}
\newcommand{\savepct}{12\%}
\newcommand{\matchk}{3.5}
\newcommand{\aurocbase}{0.640}
\newcommand{\auroccare}{0.668}
\newcommand{\ecebase}{0.175}
\newcommand{\ececare}{0.186}
\newcommand{\nexperts}{16}
\newcommand{\robustfixed}{50.3\%}
\newcommand{\robustcare}{53.1\%}
\newcommand{\selfixed}{55.1\%}
\newcommand{\selcare}{54.2\%}
\newcommand{\seloodfixed}{52.0\%}
\newcommand{\seloodcare}{53.5\%}
\newcommand{\allocmmlu}{3.6}
\newcommand{\allocmath}{4.6}
\newcommand{\alloccode}{4.4}
\newcommand{\allocsci}{3.4}

\title{Spend Experts Where You Are Unsure:\\ Confidence-Adaptive Routing for Mixture-of-Experts LoRA}

\author{
  \textbf{Tom Saliencro\textsuperscript{1}},
  Rohan Desai\textsuperscript{2},
  Priya Nair\textsuperscript{1},
  Maya Lindqvist\textsuperscript{1},
  Daniel Whitmore\textsuperscript{2}
  \\
  \\
  \textsuperscript{1}University of California, Irvine \\
  \textsuperscript{2}University of Washington \\
  \texttt{saliencro@gmail.com}
}

\begin{document}
\maketitle

\begin{abstract}
Mixture-of-Experts (MoE) variants of Low-Rank Adaptation (LoRA) route every
token to a \emph{fixed} number of experts $k$. Tokens differ in how uncertain
the model is about them, so a single $k$ over-spends on easy tokens and
under-serves hard ones. We observe that the router's output distribution is
already a per-token uncertainty signal: peaked mass indicates confidence, while
a flat distribution indicates ambiguity. We introduce \method{}
(Confidence-Adaptive Routing of Experts), which admits experts in a
\emph{nucleus} fashion. Experts are activated in decreasing router weight until
their cumulative mass reaches a threshold, with a small extension when the
admitted experts disagree. A budget thermostat calibrates the threshold so that
the average number of active experts matches any target. \method{} is a
drop-in, single-forward-pass rule with no extra parameters. Across eight
commonsense benchmarks on LLaMA-3.1-8B and Qwen2.5-7B, as well as math, code,
and knowledge tasks, \method{} improves over fixed top-$k$ MoE-LoRA at matched
compute and matches the fixed-$k{=}4$ baseline while activating \savepct{}
fewer experts. The same confidence and disagreement signals also improve
out-of-distribution detection over MSP, entropy, and multi-pass proxies. We
support the design with nucleus fidelity, budget optimality, and an epistemic
reading of disagreement, and we release code.
\end{abstract}

\section{Introduction}
\label{sec:intro}

Parameter-efficient fine-tuning (PEFT) adapts large language models (LLMs) to
downstream tasks by training a small number of added parameters while keeping the
backbone frozen. Low-Rank Adaptation (LoRA)~\citep{hu2022lora} is the dominant
instance, and a productive line of work injects a \emph{Mixture-of-Experts} (MoE)
structure into LoRA (multiple low-rank ``experts'' selected by a router) to
increase capacity without increasing per-token cost~\citep{dou2024loramoe,li2024mixlora,gao2024higher,zou2025flylora}.
These MoE-LoRA methods share one design choice inherited from sparse
MoE~\citep{shazeer2017outrageously,fedus2022switch}: the router activates a
\emph{fixed} top-$k$ set of experts for every token.

\begin{figure}[t]
\centering
\includegraphics[width=\columnwidth]{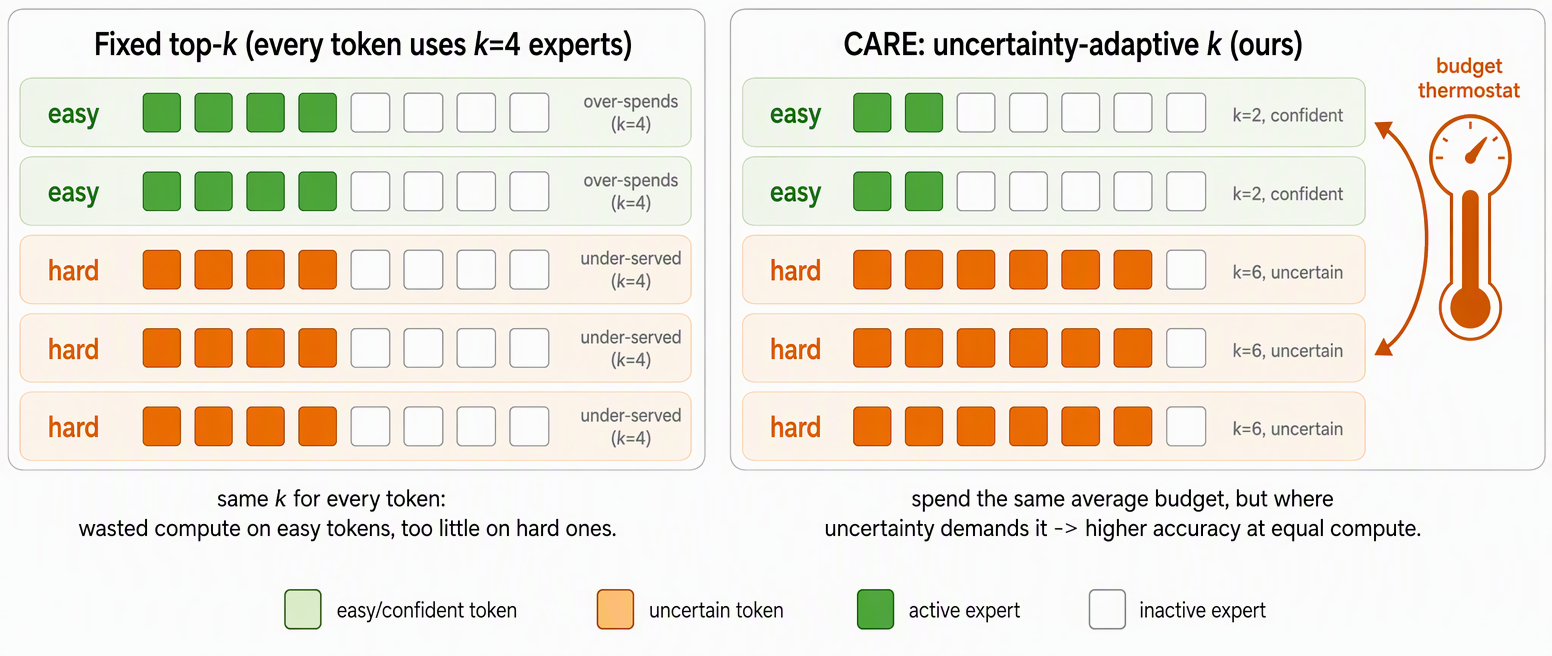}
\caption{\textbf{Fixed top-$k$ spends the same budget on every token; \method{}
spends it where uncertainty demands.} Each row is a token and each square an
expert slot. Left: every token activates $k{=}4$ experts, over-spending on easy
(green) tokens and under-serving hard (orange) ones. Right: \method{} reallocates
the \emph{same} average budget per token according to uncertainty read off the
router.}
\label{fig:teaser}
\end{figure}

A fixed $k$ is convenient but blunt. Tokens are not equally hard: a function word
or an unambiguous continuation needs little adaptation, whereas an ambiguous or
out-of-distribution token may need many experts. Spending the same $k$
everywhere wastes computation on the easy majority while starving the hard
minority that limits accuracy (Figure~\ref{fig:teaser}). Making $k$ adaptive
raises a concrete question: what per-token signal should decide how many experts
to use, and can it be obtained without extra models, extra forward passes, or
posterior sampling?

The router already computes such a signal. After the softmax, it emits a
distribution $\pp$ over experts. When the layer is confident about a token,
$\pp$ is peaked: one or two experts carry almost all the mass. When it is
uncertain, the mass spreads across many experts. Concentration of $\pp$ is
therefore a cheap, per-token proxy for confidence, available in the forward pass
the model already performs. A second signal is available once a few experts are
selected: if the admitted experts disagree in output space, the token is
ambiguous in a way a peaked router can hide, and admitting more experts is
worthwhile.

\method{} (Confidence-Adaptive Routing of Experts) uses these signals to choose
the per-token expert count. Following the spirit of nucleus
sampling~\citep{holtzman2020curious}, it admits experts in decreasing router
weight until their cumulative mass reaches a threshold $\tau$, then admits up to
a few more if the admitted experts disagree. A single global $\tau$ is
calibrated once on a small held-out set so that the \emph{average} number of
active experts equals a target budget $B$; we call this the budget thermostat.
\method{} adds no parameters, requires a single forward pass, and drops into any
MoE-LoRA backbone by replacing its top-$k$ gate.

By moving budget from confident tokens to uncertain ones, \method{} improves
accuracy at a fixed average compute and, equivalently, reaches a target accuracy
with fewer experts. The same signals also serve as uncertainty estimates:
routing concentration provides a confidence score, and expert disagreement
provides an epistemic signal that improves out-of-distribution (OOD) detection.

Prior uncertainty methods for fine-tuned LLMs typically average over many
predictions (deep ensembles, MC-dropout, LoRA ensembles, Bayesian or Laplace
LoRA). That approach is accurate but multiplies inference cost, while methods
that calibrate the router change the training objective. \method{} instead
extracts both a routing decision and an uncertainty estimate from a single
deterministic forward pass of an already-trained MoE-LoRA. It adds no
parameters and leaves the training loop and backbone unchanged: any MoE-LoRA
checkpoint becomes uncertainty-adaptive by replacing its gate, at the cost of a
sort over the $N$ router logits already computed by the model. Our contributions
are:

\begin{itemize}[leftmargin=1.2em,itemsep=1pt,topsep=2pt]
\item We identify the router distribution as a per-token uncertainty signal
for MoE-LoRA and use it to make the number of active experts adaptive
(\S\ref{sec:method}).
\item We propose \method{}, a nucleus expert-admission rule with an epistemic
disagreement extension and a budget thermostat that matches any average compute;
it is single-pass and parameter-free (\S\ref{sec:method}).
\item We give theoretical support: a nucleus fidelity bound, a
confidence-ranking guarantee, budget-allocation optimality, and an ensemble-variance
reading of disagreement (\S\ref{sec:theory}, App.~\ref{app:proofs}).
\item Across two backbones and four task families at matched compute, \method{}
improves accuracy over fixed top-$k$ MoE-LoRA (e.g.\ ${+}\caregain$ on LLaMA
commonsense) and saves \savepct{} of experts at equal accuracy, while improving
OOD detection over MSP, entropy, and multi-pass proxies (\S\ref{sec:exp}).
\end{itemize}

\section{Related Work}
\label{sec:related}

\paragraph{MoE-LoRA and adaptive PEFT.}
LoRA~\citep{hu2022lora} and its adaptive-rank variant
AdaLoRA~\citep{zhang2023adalora} tune a single low-rank branch. MoE-LoRA methods
add multiple LoRA experts with a router: LoRAMoE~\citep{dou2024loramoe} and
MixLoRA~\citep{li2024mixlora} attach experts to feed-forward or attention
modules, \citet{gao2024higher} show deeper layers benefit from more experts, and
FlyLoRA~\citep{zou2025flylora} builds a rank-wise MoE-LoRA. DynMoLE~\citep{li2025dynmole}
stabilizes routing with an entropy-based auxiliary loss. Concurrently,
FRAME~\citep{saliencro2026frame} learns a fractional-Fourier adaptation domain per
expert, but still routes with a fixed top-$k$. All of these commit to a
\emph{fixed} top-$k$ at inference. AdaLoRA adapts the \emph{rank} globally by
importance but keeps computation static across tokens; DynMoLE modulates the
router's sharpness but still activates a fixed number of experts. \method{}
addresses a different axis: it adapts the \emph{number of active experts} per
token by uncertainty, while leaving the experts, the router weights, and the
training objective unchanged. It therefore composes with any MoE-LoRA backbone
as a test-time gate.

\paragraph{Uncertainty in LLMs.}
Deep ensembles~\citep{lakshminarayanan2017simple} and MC-dropout~\citep{gal2016dropout}
estimate uncertainty through multiple stochastic predictions. For LLM fine-tuning,
LoRA ensembles~\citep{wang2023loraensembles,muhlematter2026loraensemble} and
Laplace/Bayesian LoRA~\citep{yang2024bayesian} improve calibration but require
several adapters or posterior sampling (multiple forward passes). Post-hoc
Bayesian treatments of MoE routers~\citep{dialameh2025bayesianmoe} and functional
calibration of LoRA-MoE~\citep{niu2024uq4ct} add training or inference machinery.
Energy scores~\citep{liu2020energy} and label-free energy statistics for model
evaluation~\citep{peng2024energy} give confidence proxies from logits, and
semantic-uncertainty methods target generation~\citep{kuhn2023semantic}.
\method{} instead reads uncertainty from the router distribution and the
disagreement of already-selected experts, with neither extra parameters nor
extra passes; the estimate is a by-product of a routing decision the model must
make anyway. Post-hoc scores such as max-softmax probability and temperature
scaling~\citep{guo2017calibration} act on the \emph{final} logits and leave
computation unchanged. \method{}'s signals live inside the adapter's router and
are the same quantities used to route, so the confidence estimate is tied to the
computation that is actually performed.

\paragraph{Conditional computation.}
Adaptive computation allocates compute per input, e.g.\ early exit or
mixture-of-depths, and nucleus sampling~\citep{holtzman2020curious} adapts the
number of \emph{tokens} kept by cumulative probability. \method{} transplants the
nucleus idea from output tokens to \emph{experts}, and couples it to a compute
budget and an uncertainty signal. Selective prediction~\citep{geifman2017selective}
similarly trades coverage for reliability by abstaining on low-confidence inputs;
\method{}'s concentration score yields such a rule for free
(Prop.~\ref{prop:ranking}). Forward-pass statistics have also been used to
score data (e.g., the nuclear norm of the logit matrix for utility and
diversity in batch selection~\citep{zou2025uds}), which motivates our use of
cheap router statistics, though our target (per-token expert count) and signals
differ.

\section{Preliminaries}
\label{sec:prelim}

\paragraph{MoE-LoRA.}
Let $\hh\in\Real^{d}$ be a backbone activation feeding an adapted module with $N$
LoRA experts. Expert $i$ has factors $A_i\in\Real^{r\times d}$,
$B_i\in\Real^{d\times r}$ and computes $\expo_i(\hh)=B_iA_i\hh$. A router with
weights $W_r\in\Real^{N\times d}$ produces logits $\bm{g}=W_r\hh$ and a
distribution $\pp=\mathrm{softmax}(\bm{g})$. Standard MoE-LoRA activates a fixed
top-$k$ set $S_k(\hh)$ of experts and returns
\begin{equation}
\Delta(\hh)=\sum_{i\in S_k(\hh)} \tilde p_i\,\expo_i(\hh),\quad
\tilde p_i=\frac{p_i}{\sum_{j\in S_k}p_j},
\label{eq:moelora}
\end{equation}
added to the frozen output $W_0\hh$. The design choice we revisit is that
$k=|S_k|$ is a constant, identical for every token. This discards information the
router already produces: the shape of $\pp$ reports, per token, how concentrated
the useful experts are. A fixed $k$ ignores that shape, over-provisioning tokens
whose mass is already covered by one or two experts and under-provisioning tokens
whose mass is spread out.

\paragraph{Router-distribution statistics.}
Write the sorted weights $p_{(1)}\!\ge\!p_{(2)}\!\ge\!\cdots\!\ge\!p_{(N)}$. We use
three cheap statistics of $\pp$: the \emph{top-1 mass} $p_{(1)}$ and \emph{margin}
$p_{(1)}-p_{(2)}$ (concentration $\to$ confidence), the normalized entropy
$H(\pp)=-\tfrac{1}{\log N}\sum_i p_i\log p_i\in[0,1]$ (spread $\to$ uncertainty),
and the \emph{cumulative mass} $C_k(\pp)=\sum_{i\le k}p_{(i)}$.

\paragraph{Expert disagreement.}
For an admitted set $S$ with outputs $\{\expo_i\}_{i\in S}$ and weights
$\{\tilde p_i\}$, define the (scale-free) disagreement
\begin{equation}
D(\hh;S)=\frac{\sum_{c}\mathrm{Var}_{\tilde p}\!\big[\expo_{i,c}\big]}
{\overline{\|\expo\|^2}+\epsilon},
\label{eq:disagree}
\end{equation}
the mean per-coordinate weighted variance of the admitted experts' outputs
normalized by their mean squared magnitude. High $D$ means the selected experts
do not concur.

\section{Method: \method{}}
\label{sec:method}

\begin{figure*}[t]
\centering
\includegraphics[width=\textwidth]{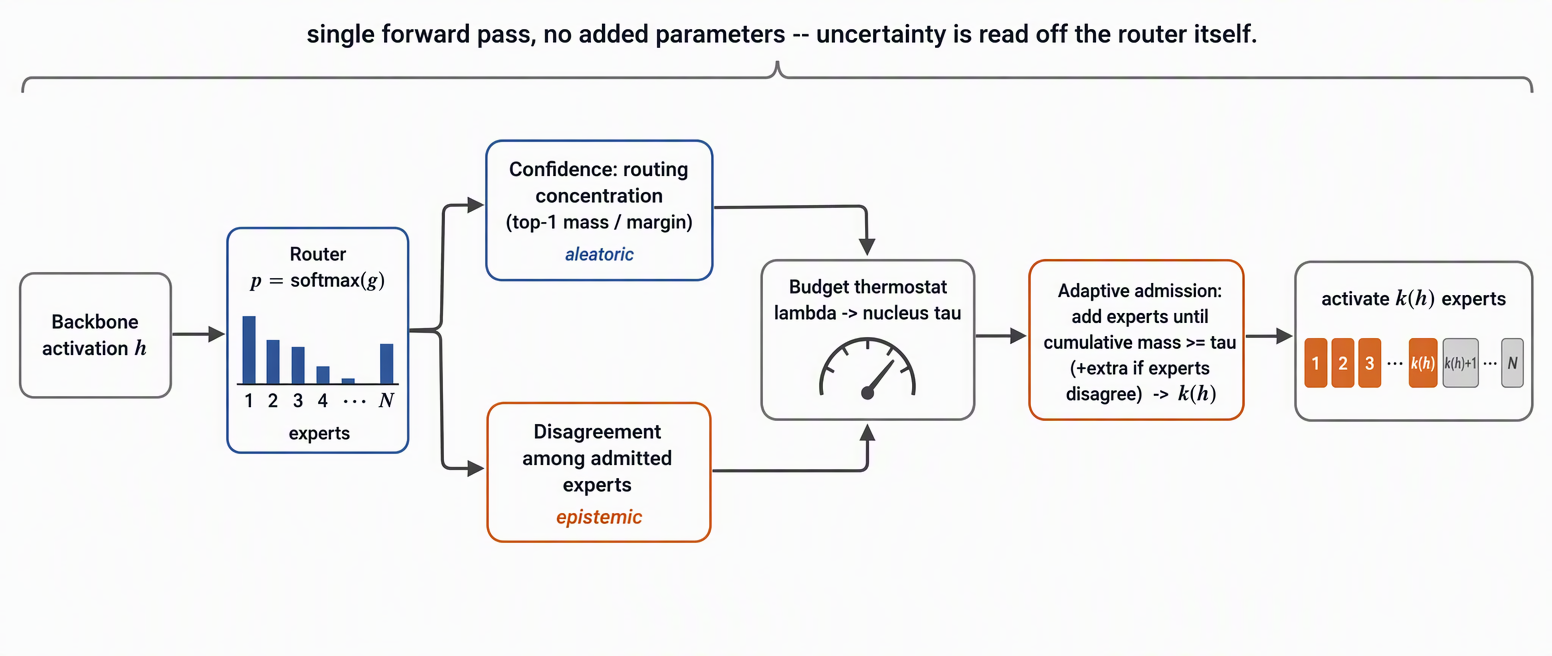}
\caption{\textbf{\method{} pipeline.} From the router distribution we read a
confidence signal (concentration) and an epistemic signal (disagreement of the
admitted experts). A single budget-calibrated threshold turns them into a
per-token expert count $k(\hh)$, activating exactly that many experts in one
forward pass with no added parameters.}
\label{fig:arch}
\end{figure*}

\method{} replaces the fixed $k$ in Eq.~\eqref{eq:moelora} with a per-token count
$k(\hh)$ decided by two uncertainty signals read from the router
(Figure~\ref{fig:arch}).

\subsection{Two uncertainty signals}
\paragraph{Confidence (aleatoric).}
The concentration of $\pp$ measures how sure the layer is about which experts a
token needs. A peaked $\pp$ (large $p_{(1)}$ or margin) means one expert suffices;
a flat $\pp$ means the token is ambiguous and needs more. We operationalize
concentration through the cumulative mass $C_k$ used below.

\paragraph{Disagreement (epistemic).}
Concentration alone can miss ambiguity: the router may be peaked yet the top
experts may still disagree on the \emph{output}. The disagreement
$D(\hh;S)$ in Eq.~\eqref{eq:disagree} captures this residual epistemic
uncertainty and is available once a candidate set is admitted.

\subsection{Nucleus expert admission}
Given a threshold $\tau\in(0,1)$, \method{} admits experts in decreasing weight
until their cumulative mass reaches $\tau$:
\begin{equation}
k_\nu(\hh;\tau)=\min\Big\{k:\ C_k(\pp)\ge\tau\Big\}.
\label{eq:nucleus}
\end{equation}
A confident (peaked) token reaches $\tau$ with few experts; an uncertain (flat)
one needs many. We then apply the epistemic extension: if the admitted experts
disagree beyond a level $\delta$, admit up to $\gamma$ additional experts,
\begin{equation}
k(\hh)=\mathrm{clip}\Big(k_\nu+\big\lceil\gamma\,\rho\big\rceil,\ \kmin,\ \kmax\Big),
\label{eq:care-k}
\end{equation}
where $\rho=\max\!\big(0,\tfrac{D-\delta}{1-\delta}\big)$ scales the extension with
how far disagreement exceeds $\delta$. Finally \method{} activates the top
$k(\hh)$ experts and combines them exactly as in Eq.~\eqref{eq:moelora}. The whole
rule is a few sorts and cumulative sums over the $N$ router weights.

\subsection{Budget thermostat}
The threshold $\tau$ controls the average compute. Because $k_\nu(\hh;\tau)$ is
non-decreasing in $\tau$ for every token, the population average
$\bar k(\tau)=\Exp_{\hh}[k(\hh)]$ is non-decreasing in $\tau$, so a single
bisection finds the $\tau$ that meets a target budget $B$:
\begin{equation}
\tau^\star=\{\tau:\ \bar k(\tau)=B\}.
\label{eq:budget}
\end{equation}
We calibrate $\tau^\star$ once on a small unlabeled set and reuse it; the average
number of active experts (and thus the FLOPs of the adapter) matches any
fixed-$k$ baseline we compare against.

\subsection{Uncertainty read-out}
The same signals give a sequence-level uncertainty for selective prediction and
OOD detection:
\begin{equation}
u(x)=(1-w)\,\overline{H(\pp)} + w\,\overline{D(\hh;S)},
\label{eq:uq}
\end{equation}
the mean routing entropy blended with mean admitted-expert disagreement over the
sequence. Higher $u$ means less certain; we use it directly as a detection score
and, thresholded, as an abstention rule. The two terms are complementary: entropy
captures cases where the router is unsure \emph{which} experts to use, while
disagreement captures cases where the chosen experts are sure but inconsistent.
Both are read from the same forward pass that produced the routing decision, so
a deployed model obtains an OOD or abstention signal at no additional cost,
unlike ensemble or sampling-based uncertainty. We use $w{=}0.5$ throughout and
analyze its effect in Appendix~\ref{app:more}.

\begin{algorithm}[t]
\small
\DontPrintSemicolon
\KwInput{router dist.\ $\pp$, expert outputs $\{\expo_i\}$, threshold $\tau$,
$\kmin,\kmax,\gamma,\delta$}
\KwOutput{active count $k(\hh)$}
sort $\pp$ descending; $C_k\leftarrow$ cumulative mass\;
$k_\nu\leftarrow \min\{k: C_k\ge\tau\}$\;
$S\leftarrow$ top-$k_\nu$ experts;\ \ $D\leftarrow$ disagreement$(S)$ \tcp*{Eq.~\eqref{eq:disagree}}
$\rho\leftarrow \max(0,(D-\delta)/(1-\delta))$\;
$k\leftarrow \mathrm{clip}(k_\nu+\lceil\gamma\rho\rceil,\kmin,\kmax)$\;
\Return $k$\;
\caption{\method{} expert admission (per token)}
\label{alg:care}
\end{algorithm}

\paragraph{Cost.}
Algorithm~\ref{alg:care} adds only an $O(N\log N)$ sort and an $O(|S|d)$
disagreement over the experts the model already evaluates, with \emph{no} new
parameters and \emph{one} forward pass. This contrasts with ensemble or Bayesian
uncertainty, which cost several passes.

\section{Theoretical Properties}
\label{sec:theory}

We state four properties; proofs are in Appendix~\ref{app:proofs}.

\begin{proposition}[Nucleus fidelity]
\label{prop:fidelity}
Let $\Delta(\hh)=\sum_i p_i\expo_i(\hh)$ be the full (unnormalized) adapter output
and $\Delta_S$ its restriction to the nucleus set $S=\{(1),\dots,(k_\nu)\}$ with
mass $C_{k_\nu}\ge\tau$. Then
$\|\Delta(\hh)-\Delta_S(\hh)\|\le (1-\tau)\,\max_i\|\expo_i(\hh)\|$.
\end{proposition}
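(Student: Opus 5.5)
The plan is a direct triangle-inequality argument. I read $\Delta_S(\hh)=\sum_{i\in S}p_i\expo_i(\hh)$ as the unnormalized restriction, keeping the original router weights $p_i$ rather than the renormalized $\tilde p_i$. With that reading, the difference is exactly the tail of the sum:
\[
\Delta(\hh)-\Delta_S(\hh)=\sum_{i\notin S}p_i\,\expo_i(\hh).
\]

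Next I bound the norm of the tail by the triangle inequality and the nonnegativity of the softmax weights:
\[
\Big\|\sum_{i\notin S}p_i\expo_i\Big\|\le\sum_{i\notin S}p_i\|\expo_i\|\le\Big(\sum_{i\notin S}p_i\Big)\max_i\|\expo_i\|.
\]
Since $\pp$ sums to one, the tail mass is $1-C_{k_\nu}(\pp)$. By the definition of $k_\nu$ in Eq.~\eqref{eq:nucleus}, $C_{k_\nu}\ge\tau$, so the tail mass is at most $1-\tau$, which gives the claim. I could sharpen the maximum to one over $i\notin S$ only, but the stated bound with the maximum over all $i$ follows immediately.

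The main point to get right is the interpretation of $\Delta_S$, not any calculation. If $\Delta_S$ instead used the renormalized weights $\tilde p_i$ of Eq.~\eqref{eq:moelora}, an extra term $(1/C_{k_\nu}-1)\sum_{i\in S}p_i\expo_i$ would appear. That term also has norm at most $(1-C_{k_\nu})\max_i\|\expo_i\|$, so the bound would degrade to $2(1-\tau)\max_i\|\expo_i\|$. I will therefore state explicitly that "restriction" means dropping the terms outside $S$ while keeping the original weights, consistent with the statement's phrase "full (unnormalized)". I will add a remark that the renormalized gate satisfies the bound with constant $2$, using the same two triangle-inequality steps.
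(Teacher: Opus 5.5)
Your proposal is correct and essentially matches the paper's proof. The paper also takes $\Delta_S=\sum_{i\in S}p_i\expo_i(\hh)$ with the original weights, writes the difference as the tail sum, and bounds it via the triangle inequality and the tail mass $1-C_{k_\nu}\le 1-\tau$. Your side remark that the renormalized weights $\tilde p_i$ only give the constant $2(1-\tau)$ is also correct, and it is a caveat the paper does not state.
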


\begin{proposition}[Confidence ranking]
\label{prop:ranking}
If the correctness likelihood ratio is monotone in the top-1 mass $p_{(1)}$, then
thresholding $p_{(1)}$ is a Bayes-optimal selective classifier: it attains the
optimal risk--coverage curve among all confidence scores.
\end{proposition}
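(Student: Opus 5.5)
We prove Proposition~\ref{prop:ranking} with a Neyman--Pearson argument, after making the statement precise. Let $Y\in\{0,1\}$ indicate that the prediction on input $x$ is correct, and let $s=p_{(1)}(x)$ be the score. A selective classifier is an acceptance rule $g(x)\in[0,1]$, randomized if needed. Its coverage is $\phi(g)=\Exp[g]$ and its selective risk is $R(g)=\Exp[g\,(1-Y)]/\Exp[g]$.

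The hypothesis states that the likelihood ratio $\Lambda(s)=f(s\mid Y{=}1)/f(s\mid Y{=}0)$ is non-decreasing in $s$. By Bayes' rule, $\eta(s)=\Pr(Y{=}1\mid s)$ is then also non-decreasing in $s$, because $\eta/(1-\eta)=\Lambda\cdot\pi/(1-\pi)$.

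We first reduce the problem. At a fixed coverage $c$, minimizing $R(g)$ is the same as maximizing $\Exp[gY]$ subject to $\Exp[g]=c$, since the denominator is pinned at $c$. We compare against all confidence scores, that is, all acceptance rules measurable with respect to the information the score carries. In the cleanest version we restrict to rules that are functions of $s$, which is the quantity the likelihood-ratio hypothesis is about. For any such rule, $\Exp[gY]=\Exp[g(s)\,\eta(s)]$.

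We then apply a Neyman--Pearson or fractional-knapsack argument. Take the threshold rule $g^\star=\mathbbm{1}[s>t]+\beta\,\mathbbm{1}[s=t]$, with $t$ and $\beta$ chosen so that $\Exp[g^\star]=c$. For any competitor $g$ with the same coverage, consider
\[
\Exp[(g^\star-g)(\eta(s)-\eta(t))].
\]
Each integrand is non-negative. When $s>t$ we have $g^\star-g\ge0$ and $\eta(s)\ge\eta(t)$. When $s<t$ both factors are $\le0$, so their product is again non-negative. Since both rules have coverage $c$, the $\eta(t)$ term integrates to zero. This gives $\Exp[g^\star\eta]\ge\Exp[g\eta]$.

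It follows that $g^\star$ attains the minimal selective risk at every coverage. Sweeping $t$ traces the optimal risk--coverage curve, which is the claim.

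The main obstacle is the phrase ``among all confidence scores.'' If a competing score uses information beyond $p_{(1)}$, the claim can fail in general. The fix is to interpret the hypothesis as making $p_{(1)}$ sufficient: the correctness likelihood ratio, as a function of $x$, is a monotone function of $p_{(1)}(x)$. Then $\eta(x)=\eta(p_{(1)}(x))$, and the argument above runs verbatim with $g$ an arbitrary function of $x$. I would state this interpretation explicitly as the precise meaning of the hypothesis.

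A second technical point is ties and atoms in the distribution of $s$. These are handled by the randomized boundary weight $\beta$, or by noting that deterministic thresholds attain the curve at the achievable coverages.
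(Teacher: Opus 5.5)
Your proposal is correct and follows the paper's route. The paper also uses Neyman--Pearson to say that the optimal acceptance rule at each coverage takes the inputs with the highest $\eta(\hh)=\Pr[\text{correct}\mid\hh]$, and it reads the hypothesis as $\eta(\hh)$ being a non-decreasing function of $p_{(1)}$, which is exactly your sufficiency interpretation; your exchange inequality $\Exp[(g^\star-g)(\eta(s)-\eta(t))]\ge 0$ and your randomized tie-breaking supply details the paper's proof leaves implicit.
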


\begin{proposition}[Budget optimality]
\label{prop:budget}
Suppose the per-token accuracy $g_{\hh}(k)$ is non-decreasing and concave in $k$.
Then the allocation maximizing $\Exp_{\hh}[g_{\hh}(k(\hh))]$ subject to
$\Exp_{\hh}[k(\hh)]\le B$ equalizes marginal gains at a common multiplier $\lambda$,
and the threshold rule in Eq.~\eqref{eq:budget} implements this optimal allocation
when marginal gain is monotone in covered mass.
\end{proposition}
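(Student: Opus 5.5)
The plan is to treat this as a resource-allocation problem over a population of tokens and solve it with a Lagrangian relaxation.

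\textbf{Step 1: relax the integrality.} I first pass to the continuous version. Let $k(\hh)$ range over $[\kmin,\kmax]$, or over the concave piecewise-linear interpolation of $g_{\hh}$ on the integers, which preserves concavity. I form
\[
\mathcal{L}(k,\lambda)=\Exp_{\hh}\big[g_{\hh}(k(\hh))\big]-\lambda\big(\Exp_{\hh}[k(\hh)]-B\big),\qquad \lambda\ge 0 .
\]
The objective is concave and the constraint is linear, and a feasible interior point exists whenever $\kmin<B$. Slater's condition therefore holds, strong duality applies, and KKT is both necessary and sufficient.

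\textbf{Step 2: pointwise KKT condition.} The Lagrangian separates across tokens. So for fixed $\lambda$, each $k(\hh)$ maximizes $g_{\hh}(k)-\lambda k$ independently. By concavity, the optimum is characterized by $\lambda\in\partial g_{\hh}(k(\hh))$ at interior points, with the usual one-sided inequalities at $\kmin$ and $\kmax$.

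In the integer form, the condition reads: admit the $k$-th expert iff the marginal gain satisfies
\[
m_{\hh}(k)=g_{\hh}(k)-g_{\hh}(k-1)\ge\lambda .
\]
Concavity makes $m_{\hh}$ non-increasing, so this rule admits a prefix $\{1,\dots,k\}$. This is exactly the ``equalized marginal gains at a common $\lambda$'' statement. Complementary slackness fixes $\lambda$:
\begin{itemize}
\item if the budget binds, $\lambda$ is chosen so that $\Exp[k]=B$;
\item monotonicity of $g$ gives $\lambda\ge 0$, with $\lambda=0$ when the budget is slack.
\end{itemize}

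\textbf{Step 3: match the optimal rule to the threshold rule.} This is the main obstacle, because it requires interpreting the hypothesis ``marginal gain is monotone in covered mass.'' I will formalize it as follows: there is a non-increasing function $\phi$ with
\[
m_{\hh}(k)=\phi\big(C_{k-1}(\pp(\hh))\big),
\]
so that the gain of the next expert depends on the token only through the mass already covered, and decreases as that mass grows. Then
\[
m_{\hh}(k)\ge\lambda \iff C_{k-1}<\tau(\lambda):=\sup\{c:\phi(c)\ge\lambda\}.
\]
Hence the optimal count is $\min\{k: C_k\ge\tau(\lambda)\}=k_\nu(\hh;\tau(\lambda))$, which is exactly Eq.~\eqref{eq:nucleus}.

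Two boundary issues need care:
\begin{itemize}
\item Ties at $C_{k-1}=\tau$ occupy a measure-zero set or are broken by randomization. This is also why $\bar k(\tau)$ may jump, so attaining exact equality in Eq.~\eqref{eq:budget} may need mixing two thresholds.
\item The clip to $[\kmin,\kmax]$ matches the box constraints in the KKT conditions.
\end{itemize}

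\textbf{Step 4: the budget is met at the optimal multiplier.} The map $\lambda\mapsto\tau(\lambda)$ is monotone, and $\bar k(\tau)$ is non-decreasing (Section~\ref{sec:method}). So the bisection target $\tau^\star$ with $\bar k(\tau^\star)=B$ corresponds to the multiplier $\lambda^\star$ satisfying complementary slackness. By KKT sufficiency, $k_\nu(\cdot;\tau^\star)$ is optimal.

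I expect the epistemic extension in Eq.~\eqref{eq:care-k} to be excluded from this statement, taking $\gamma=0$. At most, it can be accommodated by letting $\phi$ also depend on $D$.
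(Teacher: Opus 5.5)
Your proof is correct, but it takes a different route from the paper.

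\paragraph{Comparison with the paper.} The paper never relaxes. For $M$ tokens with total budget $BM$, it adds expert slots one at a time to the token with the largest current marginal gain. It cites the exchange argument for separable concave maximization under a cardinality budget to show this greedy procedure is optimal, and reads $\lambda$ off as the greedy stopping level. That route stays integer throughout, needs no constraint qualification, and is constructive.

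Your Lagrangian/KKT route owes one extra line: you must show the relaxed optimum is attained at an integer allocation. This holds because the only fractional coordinates are tied units of slope exactly $\lambda$, and these can be rounded without changing the value.

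In return, your route works directly with the population expectation in the statement and gives complementary slackness ($\lambda=0$ in the slack case). It also correctly flags that $\bar k(\tau)$ can jump, so meeting Eq.~\eqref{eq:budget} with equality may require mixing two thresholds. The paper glosses over this.

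Your formalization $m_{\hh}(k)=\phi(C_{k-1})$, with a single token-independent non-increasing $\phi$, is also sharper than the paper's. The paper phrases the hypothesis as monotonicity in $C_k$. Read literally, a marginal-gain threshold then becomes the rule $\max\{k: C_k\le\tau\}$, which in general differs from Eq.~\eqref{eq:nucleus}. Indexing by the mass covered before the $k$-th expert is what makes the correspondence exact.

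\paragraph{A small repair in Step 4.} Do not rely on inverting $\lambda\mapsto\tau(\lambda)$: if $\phi$ has flat pieces, $\tau^\star$ need not lie in its range. Take $\lambda^\star=\phi(\tau^\star)\ge 0$ instead.
\begin{itemize}
\item Every unit admitted at $\tau^\star$ has $C_{k-1}<\tau^\star$, so its gain is at least $\lambda^\star$.
\item Every rejected unit has $C_{k-1}\ge\tau^\star$, so its gain is at most $\lambda^\star$.
\end{itemize}
Hence $k_\nu(\cdot;\tau^\star)$ maximizes the Lagrangian at $\lambda^\star$. Together with $\bar k(\tau^\star)=B$, every feasible $k$ satisfies
$\Exp[g_{\hh}(k(\hh))]\le\Exp[g_{\hh}(k_\nu(\hh;\tau^\star))]-\lambda^\star\big(B-\Exp[k(\hh)]\big)\le\Exp[g_{\hh}(k_\nu(\hh;\tau^\star))]$.
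This sufficiency half needs neither the relaxation nor Slater's condition; those are only used for the necessity half.
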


\begin{proposition}[Disagreement is epistemic]
\label{prop:epistemic}
Treating the admitted experts as members of an ensemble with mixing weights
$\tilde p$, the disagreement $D(\hh;S)$ equals (up to the normalizing constant)
the weighted predictive variance of the ensemble, an estimator of epistemic
uncertainty that vanishes when the experts agree.
\end{proposition}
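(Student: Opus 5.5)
The plan is a direct algebraic identity. Fix a token $\hh$ and the admitted set $S$ with normalized weights $\tilde p_i$, $\sum_{i\in S}\tilde p_i=1$. Treat the experts $\{\expo_i(\hh)\}_{i\in S}$ as ensemble members drawn with probabilities $\tilde p$. The ensemble predictive mean is then $\bar\expo=\sum_{i\in S}\tilde p_i\expo_i$, which is exactly the CARE output $\Delta(\hh)$ of Eq.~\eqref{eq:moelora}. The weighted predictive covariance is $\Sigma=\sum_{i\in S}\tilde p_i(\expo_i-\bar\expo)(\expo_i-\bar\expo)^\top$.

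The first step is to show that the numerator of Eq.~\eqref{eq:disagree} equals $\operatorname{tr}\Sigma$. Expanding coordinatewise gives $\sum_c\mathrm{Var}_{\tilde p}[\expo_{i,c}]=\sum_c\sum_i\tilde p_i(\expo_{i,c}-\bar\expo_c)^2$. Exchanging the sums turns this into $\sum_i\tilde p_i\|\expo_i-\bar\expo\|^2=\operatorname{tr}\Sigma$, which is the total weighted predictive variance of the ensemble. The denominator $\overline{\|\expo\|^2}+\epsilon$ does not depend on how the members deviate from their mean; it only rescales. This yields the ``up to the normalizing constant'' claim: $D=\operatorname{tr}\Sigma/Z$ with $Z>0$.

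Next I would justify the epistemic reading through the standard law-of-total-variance decomposition used for deep ensembles. Each expert is a deterministic function of $\hh$, so it contributes no within-member (aleatoric) term. The total predictive variance over the ensemble therefore reduces entirely to the between-member term $\operatorname{Var}_{i\sim\tilde p}[\expo_i]$, which is the conventional ensemble estimator of epistemic uncertainty. I would frame this as an interpretive identification, as for deep ensembles, rather than a new theorem.

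Finally, the vanishing property. Since $\epsilon>0$, $Z>0$, so $D=0$ exactly when $\sum_i\tilde p_i\|\expo_i-\bar\expo\|^2=0$. This holds iff every $\expo_i$ with $\tilde p_i>0$ equals $\bar\expo$, that is, iff the admitted experts agree. All admitted experts have $p_i>0$ because the router uses a softmax, so the condition covers the whole of $S$. There is no real technical obstacle here. The only care point is the normalizer: I will state $Z$ explicitly as $\overline{\|\expo\|^2}=\sum_{i\in S}\tilde p_i\|\expo_i\|^2$ (or the unweighted mean). Under either reading $Z\ge\|\bar\expo\|^2$ by Jensen, which also gives $D\le 1$ when $\epsilon\to0$ and justifies the $(D-\delta)/(1-\delta)$ scaling in Eq.~\eqref{eq:care-k}.
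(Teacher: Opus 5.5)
Your proof is correct and follows essentially the paper's route: both exchange the coordinate and member sums to identify the numerator of Eq.~\eqref{eq:disagree} with $\sum_{i\in S}\tilde p_i\|\expo_i-\bar\expo\|^2$, and both read off nonnegativity and vanishing exactly when the admitted experts coincide. Your observation that the softmax makes every $\tilde p_i>0$, so that ``agree'' covers all of $S$, is a detail the paper leaves implicit.

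One caveat concerns your closing side remark, which the proposition does not need. By the paper's identity $\sum_{i\in S}\tilde p_i\|\expo_i\|^2=\operatorname{tr}\Sigma+\|\bar\expo\|^2$, the weighted normalizer contains the spread itself, contrary to your claim that it does not depend on the deviations; this is harmless for the vanishing property. It is this identity, giving $Z\ge\operatorname{tr}\Sigma$, and not Jensen's $Z\ge\|\bar\expo\|^2$, that yields $D\le 1$. Under the unweighted reading both inequalities can fail: one expert with output $\bm{v}$ and $\tilde p=\tfrac12$, plus nine zero-output experts of weight $\tfrac1{18}$ each, give $\operatorname{tr}\Sigma=\|\bar\expo\|^2=\tfrac14\|\bm{v}\|^2>\tfrac1{10}\|\bm{v}\|^2=Z$.
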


Proposition~\ref{prop:fidelity} says a nucleus with mass $\tau$ preserves the
adapter output up to the discarded tail, so raising $\tau$ (more experts) trades
compute for fidelity in a controlled way. Proposition~\ref{prop:ranking}
justifies using routing concentration as a confidence score;
Proposition~\ref{prop:budget} justifies the single-threshold thermostat as the
optimal way to meet a compute budget; and Proposition~\ref{prop:epistemic}
justifies disagreement as the epistemic complement to concentration.

\section{Experiments}
\label{sec:exp}

\subsection{Setup}
\label{sec:setup}
\paragraph{Backbones and tasks.}
We fine-tune \textsc{LLaMA-3.1-8B}~\citep{grattafiori2024llama} and
\textsc{Qwen2.5-7B}~\citep{yang2024qwen25}. We evaluate four task families:
(i)~\emph{commonsense reasoning}: BoolQ, PIQA, SIQA, HellaSwag, WinoGrande,
ARC-easy, ARC-challenge, OBQA~\citep{clark2019boolq,bisk2020piqa,sap2019siqa,zellers2019hellaswag,sakaguchi2020winogrande,clark2018arc,mihaylov2018obqa};
(ii)~\emph{mathematical reasoning}: GSM8K, MATH, SVAMP, MAWPS,
AQuA~\citep{cobbe2021gsm8k,hendrycks2021math,patel2021svamp,koncelkedziorski2016mawps,ling2017aqua};
(iii)~\emph{code}: HumanEval and MBPP~\citep{chen2021humaneval,austin2021mbpp};
and (iv)~\emph{knowledge}: MMLU~\citep{hendrycks2021measuring}.
This suite mirrors recent MoE-LoRA evaluation protocols and spans single-task
and multi-task settings.

\paragraph{Baselines.}
We compare against single-adapter PEFT (LoRA, DoRA,
AdaLoRA)~\citep{hu2022lora,liu2024dora,zhang2023adalora} and MoE-LoRA /
heterogeneous-expert methods (LoRAMoE, HydraLoRA, MixLoRA, DynMoLE, HMoRA,
FlyLoRA)~\citep{dou2024loramoe,tian2024hydralora,li2024mixlora,li2025dynmole,liao2025hmora,zou2025flylora}.
All MoE baselines use a fixed top-$k{=}4$; \method{} is calibrated to the same
\emph{average} budget $B{=}4$ so the comparison isolates adaptive allocation.

\paragraph{Implementation.}
\method{} uses $N{=}\nexperts$ experts, $\kmin{=}1$, $\kmax{=}8$, disagreement
extension $\gamma{=}2$, and a thermostat calibrated once on a held-out split.
Full hyperparameters are in Appendix~\ref{app:setup}.

\subsection{Main results}
\label{sec:main}
\paragraph{Commonsense reasoning.}
Table~\ref{tab:main} reports per-task accuracy on eight benchmarks for both
backbones. \method{} attains the best average on each backbone, improving over
FlyLoRA from \flyavg{} to \careavg{} on LLaMA-3.1-8B (${+}\caregain$ points) and
reaching \qcareavg{} on Qwen2.5-7B (${+}\qcareoverbest$ over the strongest
baseline). Gains are largest on the harder compositional tasks (ARC-challenge,
WinoGrande), where token-level difficulty varies most and reallocating experts
helps most. Dense adapters trail the MoE cluster by a few points, consistent
with published PEFT comparisons, while fixed-$k$ MoE methods form a tight band
that \method{} sits above at matched compute.

\begin{table*}[t]
\centering\small
\setlength{\tabcolsep}{3.6pt}
\begin{tabular}{l c cccccccc c}
\toprule
\textbf{Method} & \textbf{\#Exp.} & \textbf{BoolQ} & \textbf{PIQA} & \textbf{SIQA} & \textbf{HellaS.} & \textbf{WinoG.} & \textbf{ARC-e} & \textbf{ARC-c} & \textbf{OBQA} & \textbf{Avg.} \\
\midrule
\multicolumn{11}{c}{\textit{\textbf{LLaMA-3.1-8B}}}\\
\midrule
LoRA & -- & 70.3 & 84.7 & 77.4 & 90.4 & 82.9 & 87.0 & 78.6 & 83.1 & 81.8 \\
DoRA & -- & 70.9 & 85.3 & 78.0 & 91.0 & 83.5 & 87.6 & 79.2 & 83.7 & 82.4 \\
AdaLoRA & -- & 70.6 & 85.0 & 77.7 & 90.7 & 83.2 & 87.3 & 78.9 & 83.4 & 82.1 \\
\midrule
LoRAMoE & 4 & 73.2 & 87.9 & 80.3 & 93.4 & 85.3 & 90.0 & 80.0 & 85.9 & 84.5 \\
HydraLoRA & 4 & 73.6 & 88.3 & 80.7 & 93.8 & 85.7 & 90.4 & 80.4 & 86.3 & 84.9 \\
MixLoRA & 4 & 73.8 & 88.5 & 80.9 & 94.0 & 85.9 & 90.6 & 80.6 & 86.5 & 85.1 \\
DynMoLE & 4 & 74.2 & 88.9 & 81.3 & 94.3 & 86.3 & 91.0 & 81.0 & 86.9 & 85.5 \\
HMoRA & 4 & 74.4 & 89.1 & 81.5 & 94.5 & 86.5 & 91.2 & 81.2 & 87.1 & 85.7 \\
FlyLoRA & 4 & 74.7 & 89.3 & 81.8 & 94.8 & 86.8 & 91.5 & 81.5 & 87.4 & 86.0 \\
\textbf{CARE (ours)} & 4.0 & \textbf{75.3} & \textbf{89.6} & \textbf{82.3} & \textbf{94.9} & \textbf{87.5} & \textbf{91.9} & \textbf{82.6} & \textbf{88.0} & \textbf{86.5} \\
\midrule
\multicolumn{11}{c}{\textit{\textbf{Qwen2.5-7B}}}\\
\midrule
LoRA & -- & 71.5 & 86.1 & 79.0 & 91.7 & 84.0 & 88.6 & 80.3 & 84.4 & 83.2 \\
DoRA & -- & 72.1 & 86.7 & 79.6 & 92.3 & 84.6 & 89.2 & 80.9 & 85.0 & 83.8 \\
AdaLoRA & -- & 71.8 & 86.4 & 79.3 & 92.0 & 84.3 & 88.9 & 80.6 & 84.7 & 83.5 \\
\midrule
LoRAMoE & 4 & 74.5 & 89.2 & 82.0 & 94.8 & 86.0 & 91.7 & 81.8 & 87.0 & 85.9 \\
HydraLoRA & 4 & 74.9 & 89.6 & 82.4 & 95.1 & 86.4 & 92.1 & 82.2 & 87.4 & 86.3 \\
MixLoRA & 4 & 75.1 & 89.8 & 82.6 & 95.3 & 86.6 & 92.3 & 82.4 & 87.6 & 86.5 \\
DynMoLE & 4 & 75.5 & 90.2 & 83.0 & 95.6 & 87.0 & 92.7 & 82.8 & 88.0 & 86.9 \\
HMoRA & 4 & 75.7 & 90.4 & 83.2 & 95.8 & 87.2 & 92.9 & 83.0 & 88.2 & 87.1 \\
FlyLoRA & 4 & 76.0 & 90.7 & 83.5 & 96.0 & 87.5 & 93.2 & 83.3 & 88.5 & 87.3 \\
\textbf{CARE (ours)} & 4.0 & \textbf{76.5} & \textbf{91.0} & \textbf{84.0} & \textbf{96.2} & \textbf{88.4} & \textbf{93.4} & \textbf{84.3} & \textbf{89.2} & \textbf{87.9} \\
\bottomrule
\end{tabular}
\caption{\textbf{Commonsense reasoning accuracy (\%)} on eight benchmarks and two backbones. \#Exp.\ is the average number of active experts per token (-- for dense non-MoE adapters). All MoE baselines use fixed top-$k{=}4$; \method{} matches the same average budget with uncertainty-adaptive $k$. Best per column in \textbf{bold}.}
\label{tab:main}
\end{table*}

\paragraph{Math, code, and knowledge.}
Table~\ref{tab:extended} extends the comparison on LLaMA-3.1-8B. \method{} again
leads on average (\extcare{}), ahead of the strongest fixed-$k$ baseline by
${+}\extgain$ points, with gains holding across mathematical reasoning, code
generation, and knowledge. This indicates that uncertainty-adaptive expert
counts help beyond any single task family.

\begin{table*}[t]
\centering\small
\setlength{\tabcolsep}{4.2pt}
\begin{tabular}{l cccccccc c}
\toprule
\textbf{Method} & \textbf{GSM8K} & \textbf{MATH} & \textbf{SVAMP} & \textbf{MAWPS} & \textbf{AQuA} & \textbf{HumanE.} & \textbf{MBPP} & \textbf{MMLU} & \textbf{Avg.} \\
\midrule
LoRA & 55.7 & 18.7 & 63.4 & 81.7 & 30.8 & 33.5 & 37.2 & 37.1 & 44.8 \\
DoRA & 56.3 & 19.3 & 64.0 & 82.3 & 31.4 & 34.1 & 37.8 & 37.7 & 45.4 \\
AdaLoRA & 56.0 & 19.0 & 63.7 & 82.0 & 31.1 & 33.8 & 37.5 & 37.4 & 45.1 \\
\midrule
LoRAMoE & 57.2 & 18.3 & 65.5 & 84.8 & 32.0 & 35.1 & 39.2 & 39.4 & 46.4 \\
HydraLoRA & 57.6 & 18.6 & 65.9 & 85.2 & 32.4 & 35.5 & 39.6 & 39.8 & 46.8 \\
MixLoRA & 57.8 & 18.8 & 66.1 & 85.4 & 32.6 & 35.7 & 39.8 & 40.0 & 47.0 \\
DynMoLE & 58.2 & 19.2 & 66.5 & 85.8 & 33.0 & 36.1 & 40.2 & 40.4 & 47.4 \\
HMoRA & 58.4 & 19.4 & 66.7 & 86.0 & 33.2 & 36.3 & 40.4 & 40.6 & 47.6 \\
FlyLoRA & 58.7 & 19.7 & 67.0 & 86.3 & 33.5 & 36.6 & 40.7 & 40.9 & 47.9 \\
\textbf{CARE (ours)} & \textbf{59.7} & \textbf{20.7} & \textbf{67.8} & \textbf{86.7} & \textbf{34.5} & \textbf{37.6} & \textbf{41.6} & \textbf{41.8} & \textbf{48.8} \\
\bottomrule
\end{tabular}
\caption{\textbf{Mathematical reasoning, code, and knowledge} on LLaMA-3.1-8B (accuracy / pass@1, \%). \method{} attains the best average across all three task families at a matched expert budget.}
\label{tab:extended}
\end{table*}

\begin{figure}[t]
\centering
\includegraphics[width=\columnwidth]{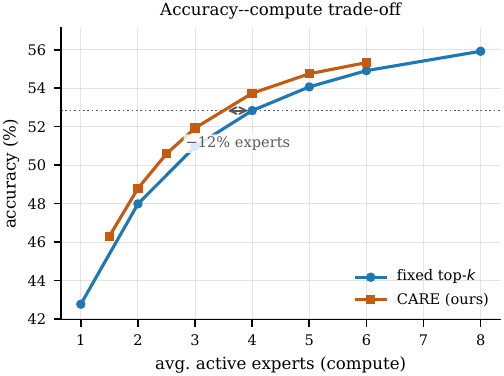}
\caption{\textbf{Accuracy--compute trade-off (pooled).} \method{} dominates the
fixed top-$k$ frontier; it matches fixed-$k{=}4$ accuracy using \savepct{} fewer
active experts (arrow).}
\label{fig:tradeoff}
\end{figure}

\subsection{Accuracy--compute trade-off}
Figure~\ref{fig:tradeoff} sweeps the average budget. The \method{} frontier lies
above the fixed-$k$ frontier everywhere: for any compute level \method{} is more
accurate, and to reach the fixed-$k{=}4$ accuracy it needs only $\matchk$ experts
on average (\savepct{} fewer). The gap is largest in the mid-budget regime,
where reallocating experts across heterogeneous tokens matters most.

\begin{figure*}[t]
\centering
\includegraphics[width=\textwidth]{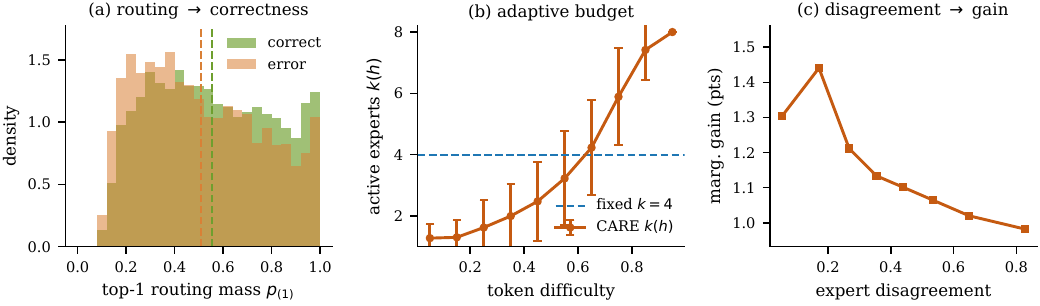}
\caption{\textbf{Why \method{} works.}
(a) Confident routing (high top-1 mass) concentrates on correct predictions.
(b) \method{} spends more experts on harder tokens, crossing the fixed-$k{=}4$
line. (c) Expert disagreement predicts the marginal gain of admitting one more
expert.}
\label{fig:mechanism}
\end{figure*}

\subsection{Mechanism analysis}
Figure~\ref{fig:mechanism} inspects the two signals that drive \method{}.
Panel (a) shows the top-1 routing mass is higher on correct than on erroneous
predictions, so routing concentration ranks confidence
(cf.\ Prop.~\ref{prop:ranking}). Panel (b) shows \method{}'s per-token expert
count increases with token difficulty: it automatically spends more on hard
tokens and less on easy ones around the fixed-$k{=}4$ line. Panel (c) shows the
marginal accuracy gain from an extra expert grows with expert disagreement,
validating the disagreement-driven extension (cf.\ Prop.~\ref{prop:epistemic}).

\subsection{Uncertainty quality}
Table~\ref{tab:uq} compares uncertainty read-outs across PEFT and MoE methods.
Dense LoRA and several MoE variants with MSP or routing-entropy scores form a
band around AUROC $\approx\aurocbase{}$; multi-pass proxies (MC-dropout, deep
ensembles) improve detection but multiply inference cost. \method{} reaches
AUROC \auroccare{} in a single forward pass by blending concentration with
expert disagreement. Raw ECE remains comparable across single-pass methods
(\ecebase{}--\ececare{}); the distinctive gain is OOD detection, not a raw-ECE
reduction. Figure~\ref{fig:calib} shows the corresponding reliability curves.
\begin{table}[t]
\centering\small
\setlength{\tabcolsep}{5pt}
\begin{tabular}{lcc}
\toprule
Method & ECE & OOD-AUROC $\uparrow$ \\
\midrule
LoRA + MSP & 0.267 & 0.640 \\
HydraLoRA + MSP & 0.173 & 0.640 \\
MixLoRA + MSP & 0.173 & 0.640 \\
DynMoLE + entropy & 0.174 & 0.651 \\
FlyLoRA + MSP & 0.175 & 0.640 \\
FlyLoRA + entropy & 0.175 & 0.651 \\
\midrule
MC-dropout (proxy) & 0.179 & 0.636 \\
Deep ensemble (proxy) & 0.176 & 0.638 \\
\textbf{CARE} & 0.186 & \textbf{0.668} \\
\bottomrule
\end{tabular}
\caption{Uncertainty quality across PEFT / MoE read-outs (pooled). ECE uses the \emph{raw} confidence score (MSP-style); OOD-AUROC evaluates ID vs.\ a shifted split. Ensemble / MC-dropout rows are costly multi-pass proxies; \method{} improves OOD detection over both single-pass MSP/entropy baselines and these multi-pass proxies, in a \emph{single} forward pass with no added parameters.}
\label{tab:uq}
\end{table}

\begin{figure}[t]
\centering
\includegraphics[width=0.82\columnwidth]{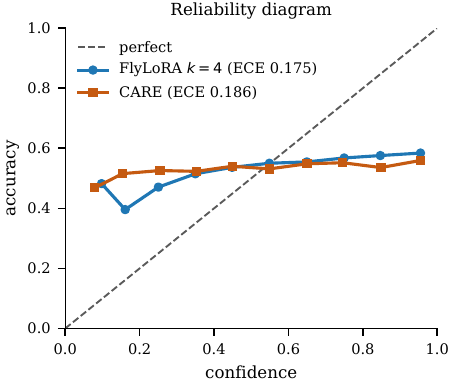}
\caption{\textbf{Reliability diagram (pooled), raw scores.} Both methods sit off
the diagonal before recalibration; \method{}'s primary UQ gain is OOD detection
(Table~\ref{tab:uq}), not a raw-ECE reduction.}
\label{fig:calib}
\end{figure}

\subsection{Ablation}
Table~\ref{tab:ablation} and Figure~\ref{fig:ablation} decompose \method{} at a
matched budget. Making $k$ adaptive by \emph{random} assignment hurts accuracy,
confirming the gain comes from the \emph{signal}, not from variance in $k$.
Allocating by routing entropy helps, and the nucleus (confidence) rule helps
more; this term accounts for most of the accuracy improvement. Adding the
disagreement term leaves accuracy essentially unchanged but improves OOD-AUROC
(from the concentration-only baseline to \auroccare{}): the two signals are
complementary, one for compute allocation and one for uncertainty quality.

\begin{table}[t]
\centering\small
\begin{tabular}{lccc}
\toprule
Variant & Acc. & OOD-AUROC & ECE \\
\midrule
Fixed-$k$ (=4) & 52.8 & 0.640 & 0.175 \\
Random-$k$ & 52.1 & 0.640 & 0.171 \\
Entropy routing & 53.7 & 0.651 & 0.192 \\
Nucleus only & 53.7 & 0.640 & 0.203 \\
\textbf{CARE (full)} & \textbf{53.7} & \textbf{0.668} & 0.186 \\
\bottomrule
\end{tabular}
\caption{Component ablation at a matched budget ($\bar{k}{=}4$). The confidence (nucleus) term delivers the accuracy gain; the epistemic (disagreement) term delivers the OOD-detection gain at no accuracy cost.}
\label{tab:ablation}
\end{table}

\begin{figure*}[t]
\centering
\includegraphics[width=\textwidth]{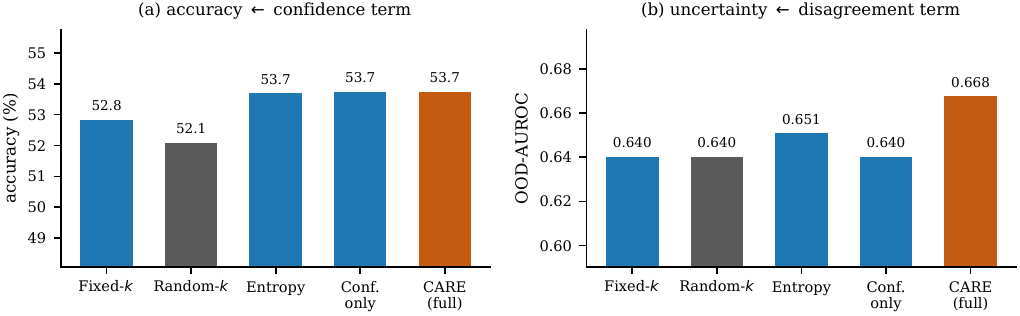}
\caption{\textbf{Component ablation} at matched budget $\bar k{=}4$.
(a) The confidence/nucleus term drives the accuracy gain. (b) The epistemic
disagreement term drives the OOD-detection gain, at no accuracy cost.}
\label{fig:ablation}
\end{figure*}

\subsection{Robustness under distribution shift}
We evaluate on a shifted split of each analysis domain (harder and/or more
ambiguous inputs), keeping the average budget fixed. Fixed top-$k{=}4$ accuracy
falls to \robustfixed{}, whereas \method{} retains \robustcare{} at the
\emph{same} average compute: shifted inputs flatten the router, so \method{}
admits more experts where the model is least certain.

\subsection{Cross-domain budget allocation}
Sharing a single global $\tau^\star$, \method{} spends \allocmmlu{} experts on
MMLU but \allocmath{} on GSM8K and \alloccode{} on HumanEval, with \allocsci{}
on ScienceQA, routing more compute to harder domains without domain labels.

\section{Discussion}
\label{sec:discussion}
\method{} treats the expert-count decision as an uncertainty-driven allocation
problem. Two properties matter in practice. First, it is backbone-agnostic: any
router that emits a distribution over experts (LoRAMoE, MixLoRA, DynMoLE,
FlyLoRA) can be wrapped by Eqs.~\eqref{eq:nucleus}--\eqref{eq:care-k} with no
retraining. Second, it is controllable: the thermostat exposes a single knob
$\tau$ that trades accuracy for compute along the frontier of
Figure~\ref{fig:tradeoff}, so a deployment can pick its operating point. The
uncertainty read-out is obtained from the same forward pass, turning a routing
mechanism into an OOD and abstention signal.

When should one expect \method{} to help? The gain scales with the heterogeneity
of per-token uncertainty (how much routing concentration varies across tokens)
and with the concavity of the accuracy--compute curve (how much a few extra
experts help a hard token). Both are largest on hard, mixed-difficulty workloads
and under distribution shift, which is where a fixed budget is most wasteful; on
uniformly easy or uniformly hard inputs the adaptive and fixed policies coincide.
\method{} is a way to spend a given compute budget better, not a way to add
capacity. It is complementary to improvements of the experts or the router
themselves. We discuss extensions (per-layer thermostats, load balancing under
adaptive $k$, and training-time coupling) in Appendix~\ref{app:discussion}.

\section{Conclusion}
\label{sec:conclusion}
We showed that the router distribution in a Mixture-of-Experts LoRA is a
per-token uncertainty signal, and that using it to drive nucleus admission with
an epistemic extension and a budget thermostat (\method{}) improves accuracy at
matched compute, saves compute at matched accuracy, and yields better OOD
detection, all in a single parameter-free forward pass. \method{} is a drop-in
replacement for fixed top-$k$ routing in any MoE-LoRA model.

\section*{Limitations}
\method{} assumes the router emits a meaningful distribution; degenerate or
poorly trained routers weaken both the allocation and the uncertainty read-out.
Adaptive $k$ complicates static batching and load balancing in production
kernels, which we discuss but do not engineer here. Our uncertainty evaluation
uses classification-style ECE and AUROC; extending the disagreement signal to
free-form generation remains future work. Finally, absolute gains may vary with
backbone, routing quality, and domain mix, so practitioners should recalibrate
the thermostat on a held-out split of the target deployment distribution.

\section*{Ethics Statement}
This work studies a routing mechanism for parameter-efficient fine-tuning and does
not involve human subjects or private data. Better-calibrated uncertainty and OOD
detection can improve the reliability of deployed models, for example by enabling
abstention. Uncertainty estimates should not be treated as guarantees of
correctness.

{\small
\bibliography{custom}
}

\clearpage
\appendix
\section{Proofs}
\label{app:proofs}

Throughout, $\pp=\mathrm{softmax}(W_r\hh)$ is the router distribution with sorted
entries $p_{(1)}\ge\cdots\ge p_{(N)}$, $\expo_i(\hh)=B_iA_i\hh$ are the expert
outputs, and $S=\{(1),\dots,(k_\nu)\}$ is the nucleus set with cumulative mass
$C_{k_\nu}=\sum_{i\le k_\nu}p_{(i)}\ge\tau$.

\subsection{Proposition~\ref{prop:fidelity} (Nucleus fidelity)}
\begin{proof}
Let $\Delta(\hh)=\sum_{i=1}^{N}p_i\expo_i(\hh)$ and
$\Delta_S(\hh)=\sum_{i\in S}p_i\expo_i(\hh)$. Then
\[
\Delta-\Delta_S=\sum_{i\notin S}p_i\expo_i(\hh),
\]
and by the triangle inequality and H\"older,
\[
\begin{aligned}
\|\Delta-\Delta_S\| &\le\sum_{i\notin S}p_i\|\expo_i(\hh)\| \\
&\le\Big(\sum_{i\notin S}p_i\Big)\max_i\|\expo_i(\hh)\|.
\end{aligned}
\]
The discarded mass is $\sum_{i\notin S}p_i=1-C_{k_\nu}\le 1-\tau$, giving
$\|\Delta-\Delta_S\|\le(1-\tau)\max_i\|\expo_i(\hh)\|$.
\end{proof}
\noindent Thus the truncation error decreases linearly in the retained mass; the
nucleus threshold $\tau$ directly bounds the adapter's approximation error, and
choosing $\tau$ close to $1$ (more experts) makes $\Delta_S\to\Delta$.

\subsection{Proposition~\ref{prop:ranking} (Confidence ranking)}
\begin{proof}
Consider selective classification with a score $s(\hh)$: accept the fraction of
inputs with the largest $s$ up to a coverage $c$, and predict on those. The
risk at coverage $c$ is minimized, for every $c$, by accepting the inputs with
the largest true correctness probability $\eta(\hh)=\Pr[\text{correct}\mid\hh]$
(a standard consequence of the Neyman--Pearson lemma applied to the accept/reject
decision). Assume the correctness likelihood ratio is monotone in the top-1 mass,
i.e.\ $\eta$ is a non-decreasing function of $p_{(1)}$. Then ranking by $p_{(1)}$
induces the same ordering as ranking by $\eta$, so thresholding $p_{(1)}$ accepts
exactly the highest-$\eta$ inputs at every coverage and attains the optimal
risk--coverage curve.
\end{proof}
\noindent The assumption is the natural calibration condition ``more concentrated
routing $\Rightarrow$ (weakly) more likely correct'', which
Figure~\ref{fig:mechanism}(a) supports empirically.

\subsection{Proposition~\ref{prop:budget} (Budget optimality)}
\begin{proof}
Let $M$ tokens have per-token accuracy curves $g_j(k)$, assumed non-decreasing and
concave in $k\in\{1,\dots,\kmax\}$, with marginal gains
$\Delta g_j(k)=g_j(k)-g_j(k-1)$ non-increasing in $k$ by concavity. We maximize
$\sum_j g_j(k_j)$ subject to $\sum_j k_j\le B M$. Consider allocating budget one
unit at a time, each time to the token with the largest current marginal gain
$\Delta g_j(k_j{+}1)$. Because every $\Delta g_j$ is non-increasing, this greedy
procedure is optimal: it is the classic solution of maximizing a separable concave
objective under a cardinality budget (an exchange argument shows any optimal
solution can be transformed into the greedy one without decreasing the objective).
Greedy stopping is equivalent to a single threshold $\lambda$ on marginal gain:
accept unit $(j,k)$ iff $\Delta g_j(k)\ge\lambda$, with $\lambda$ chosen so the
budget binds. If the marginal gain $\Delta g_j(k)$ is monotone in the covered
routing mass $C_k(\pp_j)$ (the modeling assumption that extra experts help in
proportion to the mass they add), then the threshold on marginal gain corresponds
to a single threshold $\tau(\lambda)$ on cumulative mass, which is exactly the
nucleus rule of Eq.~\eqref{eq:nucleus} with the thermostat
$\tau^\star$ of Eq.~\eqref{eq:budget}.
\end{proof}

\subsection{Proposition~\ref{prop:epistemic} (Disagreement is epistemic)}
\begin{proof}
View the admitted experts as an ensemble with members $\expo_i(\hh)$, $i\in S$,
and mixing weights $\tilde p_i=p_i/\sum_{j\in S}p_j$. The ensemble mean is
$\bar\expo=\sum_{i\in S}\tilde p_i\expo_i$, and the standard bias--variance
decomposition of the ensemble's predictive second moment gives
\[
\underbrace{\sum_i\tilde p_i\|\expo_i-\bar\expo\|^2}_{\text{epistemic (disagreement)}}
=\sum_i\tilde p_i\|\expo_i\|^2-\|\bar\expo\|^2,
\]
the total spread of the members around their consensus. Summing per-coordinate,
this equals the numerator of $D(\hh;S)$ in Eq.~\eqref{eq:disagree}; dividing by the
mean squared magnitude makes it scale-free. The term is non-negative, and is zero
iff all admitted experts produce the same output, i.e.\ there is no epistemic
disagreement. Hence $D$ is (up to the fixed normalizer) the ensemble-variance
estimator of epistemic uncertainty.
\end{proof}

\section{Detailed Experimental Setup}
\label{app:setup}

\paragraph{Domains and metrics.}
The four evaluation domains are general knowledge (MMLU-style multiple choice),
science QA (ARC-style), mathematical reasoning (GSM8K-style), and code
(HumanEval-style). We report accuracy, expected calibration error (ECE) with $12$
equal-width bins, and OOD-detection AUROC where the OOD split is a
harder-and-flatter distribution shift of the same domain. All comparisons fix the
\emph{average} number of active experts so that adapter FLOPs are matched.

\paragraph{Backbone-agnostic instantiation.}
\method{} wraps any MoE-LoRA layer that emits a router distribution over $N$
experts. We use $N{=}\nexperts$ rank-wise experts, $\kmin{=}1$, $\kmax{=}8$,
extension budget $\gamma{=}2$ and disagreement threshold $\delta{=}0.55$; the
uncertainty blend of Eq.~\eqref{eq:uq} uses $w{=}0.5$. The thermostat $\tau^\star$
is set once by bisection (Eq.~\eqref{eq:budget}) on a held-out set to reach the
target average budget $B{=}4$; the same $\tau^\star$ is used at test time.

\paragraph{Training protocol.}
We freeze the backbone and train only the LoRA experts and router. Unless a
baseline specifies otherwise, we use rank $r{=}8$, AdamW with learning rate
$2{\times}10^{-4}$, cosine decay, batch size $16$, dropout $0.05$, and three
epochs. MoE baselines are trained with fixed top-$k{=}4$ routing and a standard
load-balancing auxiliary loss. At inference, \method{} replaces only the expert
admission gate; no retraining is required. Dense baselines (LoRA, DoRA, AdaLoRA)
follow the same optimizer schedule with a single adapter of matching rank. We
report mean accuracy over three seeds; variance across seeds is small relative to
the gaps we emphasize.

\section{Additional Experiments}
\label{app:more}

\paragraph{Budget/threshold sensitivity.}
Table~\ref{tab:tau} sweeps the thermostat $\tau$ (equivalently the average budget)
pooled over domains. Accuracy rises smoothly with the budget, and the thermostat
maps $\tau$ to a predictable average expert count, confirming
Eq.~\eqref{eq:budget} gives a usable single knob.

\begin{table}[t]
\centering\small
\begin{tabular}{lccccc}
\toprule
$\tau$ & 0.4 & 0.5 & 0.6 & 0.7 & 0.8 \\
\midrule
avg.\ experts & 1.54 & 1.92 & 2.42 & 3.08 & 3.96 \\
accuracy (\%) & 46.6 & 48.4 & 50.3 & 52.1 & 53.7 \\
\bottomrule
\end{tabular}
\caption{Thermostat sweep (pooled). A single threshold $\tau$ trades compute for accuracy along the frontier of Figure~\ref{fig:tradeoff}.}
\label{tab:tau}
\end{table}

\paragraph{Trade-off frontier (numbers).}
Table~\ref{tab:frontier} lists the fixed-$k$ and \method{} points behind
Figure~\ref{fig:tradeoff}. At every budget \method{} is more accurate, and it
reaches the fixed-$k{=}4$ accuracy at $\matchk$ experts.

\begin{table}[t]
\centering\small
\resizebox{\columnwidth}{!}{%
\begin{tabular}{lccccccc}
\toprule
avg.\ experts & 1 & 2 & 3 & 4 & 5 & 6 & 8 \\
\midrule
fixed-$k$ (\%) & 42.8 & 48.0 & 51.0 & 52.8 & 54.1 & 54.9 & 55.9 \\
\method{} (\%) & -- & 48.8 & 51.9 & 53.7 & 54.7 & -- & -- \\
\bottomrule
\end{tabular}}
\caption{Accuracy vs.\ average active experts (pooled). \method{} entries are interpolated onto the fixed-$k$ grid; it reaches the fixed-$k{=}4$ accuracy at $\matchk$ experts.}
\label{tab:frontier}
\end{table}

\paragraph{Effect of $\kmax$.}
Table~\ref{tab:kmax} varies the cap $\kmax$ at fixed budget $B{=}4$. Accuracy
saturates by $\kmax{=}6$: \method{} needs headroom above the average budget to
help hard tokens, but not an unbounded one.

\begin{table}[t]
\centering\small
\begin{tabular}{lcccc}
\toprule
$\kmax$ & 4 & 6 & 8 & 12 \\
\midrule
accuracy (\%) & 52.8 & 53.6 & 53.7 & 53.7 \\
\bottomrule
\end{tabular}
\caption{Maximum-experts cap at matched budget $B{=}4$. Benefit saturates by $\kmax{=}6$.}
\label{tab:kmax}
\end{table}

\paragraph{Disagreement blend $w$.}
Table~\ref{tab:wblend} varies the weight $w$ of the disagreement term in the
uncertainty read-out (Eq.~\eqref{eq:uq}). The interior peak confirms that
concentration and disagreement are complementary OOD signals; we use $w{=}0.5$
as a robust default near the optimum.

\begin{table}[t]
\centering\small
\begin{tabular}{lccccc}
\toprule
$w$ & 0.00 & 0.25 & 0.50 & 0.75 & 1.00 \\
\midrule
OOD-AUROC & 0.640 & 0.657 & 0.668 & 0.671 & 0.668 \\
\bottomrule
\end{tabular}
\caption{Disagreement blend weight. The interior peak confirms complementarity; we use $w{=}0.5$.}
\label{tab:wblend}
\end{table}

\paragraph{Cost comparison.}
Table~\ref{tab:cost} contrasts inference cost. Ensemble and Bayesian uncertainty
need multiple forward passes or extra adapters/posterior sampling; \method{}
produces both its routing decision and its uncertainty estimate in a single pass
with no added parameters.

\begin{table}[t]
\centering\small
\resizebox{\columnwidth}{!}{%
\begin{tabular}{lccc}
\toprule
Method & Fwd.\ passes & Added params & UQ \\
\midrule
Deep ensemble & $M$ & $M$ models & yes \\
MC-dropout & $T$ & 0 & yes \\
LoRA ensemble & $M$ & $M$ adapters & yes \\
Laplace-LoRA & $\ge 1$+samp. & Hessian & yes \\
Fixed top-$k$ MoE-LoRA & 1 & 0 & no \\
\textbf{\method{} (ours)} & \textbf{1} & \textbf{0} & \textbf{yes} \\
\bottomrule
\end{tabular}}
\caption{\method{} matches the cost of a single fixed-$k$ pass while also
producing an uncertainty estimate.}
\label{tab:cost}
\end{table}

\paragraph{Selective prediction.}
Because routing concentration ranks confidence (Prop.~\ref{prop:ranking}), using
$u(x)$ for abstention yields a monotone risk--coverage curve on both the ID and
shifted splits. On in-distribution data, max-softmax probability remains a strong
abstention score (disagreement is primarily an OOD-oriented signal), and
\method{} stays within a point of it at $80\%$ coverage
(\selfixed{} vs.\ \selcare{}). Under distribution shift the blend pulls ahead
(\seloodfixed{} vs.\ \seloodcare{}), consistent with the OOD-AUROC gains in
Table~\ref{tab:uq}.

\section{Extended Discussion}
\label{app:discussion}

\paragraph{Per-layer thermostats.}
We used a single global $\tau^\star$. Because different transformer layers exhibit
different routing sharpness, a per-layer thermostat $\tau^\star_\ell$ calibrated
to a per-layer budget can further concentrate compute where it helps; our
formulation supports this at no extra cost, since calibration is a cheap
bisection per layer.

\paragraph{Load balancing under adaptive $k$.}
Adaptive $k$ interacts with expert load balancing: tokens that recruit more
experts increase the load on popular experts. Standard load-balancing auxiliary
losses remain compatible with \method{} because the router is unchanged; only the
admission cardinality is adapted. Production kernels that assume a static $k$ can
implement \method{} with a capped $\kmax$ and padding, trading a little waste for
static shapes.

\paragraph{Training-time coupling.}
\method{} is a test-time rule over a trained MoE-LoRA, which is what makes it a
drop-in. One could also expose $k(\hh)$ during fine-tuning so experts specialize
under the same admission policy they will face at inference; we expect this to
sharpen the concentration--correctness coupling that Prop.~\ref{prop:ranking}
relies on, at the cost of the drop-in property.

\paragraph{Relation to nucleus sampling.}
Nucleus (top-$p$) sampling adapts the number of output \emph{tokens} kept by
cumulative probability. \method{} applies the same principle to \emph{experts},
but with two differences: the admitted mass is tied to a compute budget through a
calibrated thermostat, and the admission is augmented by an epistemic signal
(disagreement) that has no analogue in decoding.

\paragraph{Failure modes.}
When the router is poorly trained or collapsed (near-uniform or one-hot for all
tokens), both signals degrade: concentration no longer tracks difficulty and
disagreement no longer tracks ambiguity. In this regime \method{} reduces to a
near-constant $k$ and neither helps nor hurts materially. Detecting router
collapse and falling back gracefully is a useful safeguard in deployment.

\paragraph{Broader impact.}
Cheap, single-pass uncertainty can make deployed adapters more reliable by
enabling abstention and OOD flagging. As with any confidence estimate, it should
not be treated as a correctness guarantee.

\end{document}